\documentclass[10pt,twocolumn]{IEEEtran}
\usepackage{booktabs}
\usepackage{subfigure}
\usepackage{amssymb}
\usepackage{amsfonts}
\usepackage{amsmath}
\usepackage{array}
\usepackage{threeparttable}
\usepackage{booktabs}
\usepackage{cite}
\usepackage[dvips]{graphicx}
\usepackage{color}
\usepackage{comment}
\usepackage{makecell}
\usepackage{multirow}
\usepackage{textcomp,mathcomp}
\usepackage[linesnumbered, ruled, vlined]{algorithm2e}
\usepackage{subcaption}
\usepackage{caption}
\usepackage{pifont}
\usepackage[colorlinks=true, linkcolor=red]{hyperref}

\begin{document}
% ===only use in IEEE format: BEGIN ====
\newtheorem{definition}{\it Definition}%[section]
\newtheorem{theorem}{\bf Theorem}%[section]
\newtheorem{lemma}{\it Lemma}
\newtheorem{corollary}{\bf Corollary}
\newtheorem{remark}{\it Remark}
\newtheorem{example}{\it Example}
\newtheorem{case}{\bf Case Study}
\newtheorem{assumption}{\it Assumption}
\newtheorem{property}{\it Property}
\newtheorem{proposition}{\bf Proposition}
% ===only use in IEEE format : END ====

\newcommand{\hP}[1]{{\boldsymbol h}_{{#1}{\bullet}}}
\newcommand{\hS}[1]{{\boldsymbol h}_{{\bullet}{#1}}}

\newcommand{\ba}{\boldsymbol{a}}
\newcommand{\baq}{\overline{q}}
\newcommand{\bA}{\boldsymbol{A}}
\newcommand{\bb}{\boldsymbol{b}}
\newcommand{\bB}{\boldsymbol{B}}
\newcommand{\bc}{\boldsymbol{c}}
\newcommand{\bd}{\boldsymbol{d}}
\newcommand{\bcD}{\boldsymbol{\cal D}}
\newcommand{\bcO}{\boldsymbol{\cal O}}
\newcommand{\bh}{\boldsymbol{h}}
\newcommand{\bH}{\boldsymbol{H}}
\newcommand{\bl}{\boldsymbol{l}}
\newcommand{\bm}{\boldsymbol{m}}
\newcommand{\bn}{\boldsymbol{n}}
\newcommand{\bo}{\boldsymbol{o}}
\newcommand{\bO}{\boldsymbol{O}}
\newcommand{\bp}{\boldsymbol{p}}
\newcommand{\bq}{\boldsymbol{q}}
\newcommand{\bR}{\boldsymbol{R}}
\newcommand{\bs}{\boldsymbol{s}}
\newcommand{\bS}{\boldsymbol{S}}
\newcommand{\bT}{\boldsymbol{T}}
\newcommand{\bw}{\boldsymbol{w}}
\newcommand{\bz}{\boldsymbol{z}}

\newcommand{\balpha}{\boldsymbol{\alpha}}
\newcommand{\bbeta}{\boldsymbol{\beta}}
\newcommand{\bgamma}{\boldsymbol{\gamma}}
\newcommand{\bkappa}{\boldsymbol{\kappa}}
\newcommand{\bomega}{\boldsymbol{\omega}}
\newcommand{\btomega}{\boldsymbol{\tilde \omega}}

\newcommand{\bOmega}{\boldsymbol{\Omega}}
\newcommand{\bGamma}{\boldsymbol{\Gamma}}

\newcommand{\bTheta}{\boldsymbol{\Theta}}
\newcommand{\bphi}{\boldsymbol{\phi}}
\newcommand{\btheta}{\boldsymbol{\theta}}
\newcommand{\bvarpi}{\boldsymbol{\varpi}}
\newcommand{\bpi}{\boldsymbol{\pi}}
\newcommand{\bpsi}{\boldsymbol{\psi}}
\newcommand{\bxi}{\boldsymbol{\xi}}
\newcommand{\bx}{\boldsymbol{x}}
\newcommand{\by}{\boldsymbol{y}}

\newcommand{\cA}{{\cal A}}
\newcommand{\bcA}{\boldsymbol{\cal A}}
\newcommand{\cB}{{\cal B}}
\newcommand{\cD}{{\cal D}}
\newcommand{\cE}{{\cal E}}
\newcommand{\cG}{{\cal G}}
\newcommand{\cH}{{\cal H}}
\newcommand{\cI}{{\cal I}}
\newcommand{\bcH}{\boldsymbol {\cal H}}
\newcommand{\cJ}{{\cal J}}
\newcommand{\cK}{{\cal K}}
\newcommand{\cL}{{\cal L}}
\newcommand{\cM}{{\cal M}}
\newcommand{\cO}{{\cal O}}
\newcommand{\cR}{{\cal R}}
\newcommand{\cS}{{\cal S}}
\newcommand{\dcS}{\ddot{{\cal S}}}
\newcommand{\ds}{\ddot{{s}}}
\newcommand{\cT}{{\cal T}}
\newcommand{\cU}{{\cal U}}
\newcommand{\wt}[1]{\widetilde{#1}}

\newcommand{\mA}{\mathbb{A}}
\newcommand{\mE}{\mathbb{E}}
\newcommand{\mG}{\mathbb{G}}
\newcommand{\mR}{\mathbb{R}}
\newcommand{\mS}{\mathbb{S}}
\newcommand{\mU}{\mathbb{U}}
\newcommand{\mV}{\mathbb{V}}
\newcommand{\mW}{\mathbb{W}}

\newcommand{\uq}{\underline{q}}
\newcommand{\ubq}{\underline{\boldsymbol q}}

\newcommand{\red}[1]{\textcolor[rgb]{1,0,0}{#1}}
\newcommand{\gre}[1]{\textcolor[rgb]{0,1,0}{#1}}
\newcommand{\blu}[1]{\textcolor[rgb]{0,0,0}{#1}}
\newcommand{\ltgr}[1]{\textcolor[rgb]{0.6,0.6,0.6}{#1}}

\newcommand{\best}{$\uparrow$}
\newcommand{\worst}{$\downarrow$}

% paper title
\title{SANEmerg: An Emergent Communication Framework for Semantic-aware Agentic AI Networking}

\author{\IEEEauthorblockA{Yong Xiao\IEEEauthorrefmark{1}\IEEEauthorrefmark{2}\IEEEauthorrefmark{3}, Haoran Zhou\IEEEauthorrefmark{1}, Yujie Zhou\IEEEauthorrefmark{1}, Marwan Krunz\IEEEauthorrefmark{4}} \\ %, Yayu Gao\IEEEauthorrefmark{1},  Guangming~Shi\IEEEauthorrefmark{2}\IEEEauthorrefmark{4}\IEEEauthorrefmark{3}, Ping~Zhang\IEEEauthorrefmark{5}\\ %, and Dusit~Niyato\IEEEauthorrefmark{6}\\
\IEEEauthorblockA{\IEEEauthorrefmark{1} School of Elect. Inform. \& Commun., Huazhong Univ. of Science \& Technology, China}\\
\IEEEauthorblockA{\IEEEauthorrefmark{2} Peng Cheng Laboratory, Shenzhen, China}\\
\IEEEauthorblockA{\IEEEauthorrefmark{3} Pazhou Laboratory (Huangpu), Guangzhou, China}\\
\IEEEauthorblockA{\IEEEauthorrefmark{4} Department of Electrical and Computer Engineering, the University of Arizona}\\
\thanks{*This work will be presented at IEEE/IFIP WiOpt Workshop, Columbus, OH, USA, June 2026. Copyright may be transferred without notice, after which this version may no longer be accessible.} 
}

\maketitle

\begin{abstract}
%Agentic AI networking (AgentNet) has attracted significant interest due to its promising potential to support flexible, goal-driven operations that prioritize autonomous interaction and self-adaptation within a diverse collective of AI agents. 
%With the popularity of AI agents and agentic AI systems, f
Future networking systems are envisioned to become part of an agentic AI-native ecosystem in which a vast number of heterogeneous and specialized AI agents cooperate seamlessly to fulfill complex user requirements in real time. However, traditional networking paradigms are characterized by a rigid decoupling of communication and computation, which often leads to significant inefficiencies in large-scale agentic AI networking (AgentNet) systems. Emergent communication offers a novel solution by enabling autonomous agents that support task-specific signaling protocols for information exchange and collaborative coordination. In this paper, we consider a multi-agent emergent communication framework, tailored for semantic-aware AgentNet systems in which the user's semantic intent can be automatically detected, inferred, and linked to a set of sub-tasks to be assigned to a set of agents. We investigate how communication and signaling protocols can emerge among collaborative agents with computationally bounded intelligence under stringent bandwidth constraints. Our proposed framework, called SANEmerg, is designed to facilitate the emergence of communication for collaborative task fulfillment while adhering to the physical limits of AgentNet. SANEmerg incorporates a bandwidth-adaptable importance-filter that dynamically prioritizes the transmission of higher-contribution message dimensions, ensuring robust performance in bandwidth-limited environments. Furthermore, SANEmerg integrates a complexity-regularizer grounded in the Minimum Description Length (MDL) principle to facilitate the emergence of computationally bounded signaling. Evaluated via an AgentNet prototype and extensive experimentation, SANEmerg demonstrates significant performance improvements over state-of-the-art solutions, achieving superior task accuracy while significantly reducing bandwidth and computational overhead. 
\end{abstract}
%\vspace{-0.2in}
\begin{IEEEkeywords}
Emergent communication, agentic AI networking, semantic-aware networking, 6G. 
\end{IEEEkeywords}

\section{Introduction}
\label{Section_Introduction}

%The evolution of 
6G wireless systems are characterized by the convergence of artificial intelligence (AI) and networking architectures. It is commonly believed that future networking systems will evolve towards an agentic AI-native ecosystem in which a large number of highly specialized AI agents, each with unique skillsets, coexist and operate seamlessly to meet diverse user requirements, shifting the current networking architectures from a passive bit-pipe into a dynamic agentic AI networking (AgentNet) ecosystem that is characterized by ubiquitous intelligence and autonomous multi-agent collaboration and joint decision making\cite{xiao2025AgentNet}. % driven by users' real-time requirements and service need. 

Despite its promise, AgentNet is still in its infancy. A critical gap persists in the development of a unified, highly efficient framework that supports flexible, goal-driven operations prioritizing autonomous interaction and self-adaptation within a diverse collective of AI agents \cite{XY2026SANet}. Consequently, the operational mechanisms required to orchestrate these complex interactions in resource-constrained environments remain relatively underexplored in the existing literature. More specifically, in current networking frameworks that integrate AI, service requests are typically offloaded from user equipment (UE) to remote cloud data centers or edge servers, treating the communication network merely as a ``pipeline" for data transport\cite{XY2025SANNet}. This decoupling of communication and computation results in significant inefficiencies in large-scale AgentNet systems, where the overhead of raw data movement and a lack of semantic awareness can hinder agent collaboration\cite{shi2020semantic,ITU2023SAN}. In fact, recent studies have shown that traditional human-designed communication protocols and standardized coding solutions are increasingly ill-suited to meet the unique requirements of task-driven interactions among heterogeneous agents\cite{Chen2026FiveWofMultiAgentComm}. %While hand-designed or implicit protocols were foundational in early multi-agent systems, they often result in rigid, task-specific signaling that lacks the adaptive capacity and semantic richness necessary for coordination across agents with disparate skillsets towards various goals. In the context of AgentNet, communication must transcend these predefined constraints to facilitate the organic emergence of specialized, goal-oriented signaling protocols that can dynamically adapt to both the intended mission and the available physical resources. Consequently, t
There is a pressing need for a novel AI-native networking framework that jointly considers task-driven requirements and the physical resource constraints of the networking system. 

Emergent communication offers a promising solution to meet the unique requirements of AgentNet, as it encompasses the process by which autonomous agents develop a shared signaling protocol or communication ``language" from scratch to coordinate actions and fulfill various task objectives\cite{Boldt2024EmergComm}. By evolving protocols through direct interaction in multi-agent environments, this approach naturally facilitates agentic AI-native communication and networking, enabling systems to transcend the limitations of human-engineered standards. 
Despite these advantages, a fundamental gap remains between theoretical emergence and practical implementation. A significant portion of existing literature focuses on the qualitative properties of the emergent ``language" or the resulting task rewards\cite{Erven2025EmergCommLLMAgents, Taniguchi2025GenEmergComm}, often operating under the assumption of ideal communication channels with infinite computational resources at agents\cite{Boldt2024EmergComm}. There is a lack of rigorous investigation of the impacts of communication bandwidth dynamics\cite{Chafii2023EmergCommWirelessNet} and computational complexity constraints\cite{Lazaridou2020EmergMultiAgentComm} on the emergence of communication. Most state-of-the-art solutions overlook the fact that real-world agents have finite processing power and must operate within dynamic network capacity and bandwidth constraints\cite{Chen2026FiveWofMultiAgentComm}. Consequently, many existing protocols remain difficult to implement in real-world networking scenarios. %, necessitating the development of unified frameworks that can adapt to the physical and resource constraints of modern networking environments.
%
%However, the prevailing body of research in this field predominantly focuses on the linguistic and cognitive attributes of emergent protocols—examining how features such as compositionality and systematicity evolve to mirror human language\cite{Erven2025EmergCommLLMAgents}. These investigations often neglect the physical realities of the network, specifically the impact of limited communication bandwidth and the finite computational complexity of the agents, which complicates the direct adoption of such protocols into real-world systems\cite{Chafii2023EmergCommWirelessNet,Lazaridou2020EmergMultiAgentComm}. 

Motivated by the above, in this paper, we investigate a multi-agent emergent communication framework, called SANEmerg, for semantic-aware AgentNet systems with stringent communication bandwidth constraints and computationally bounded intelligence. SANEmerg is designed to facilitate collaborative task fulfillment under communication bandwidth and computational limitations. It consists of two key components: a bandwidth-adaptive {\em importance-filter} that prioritizes transmitting information with high task contribution when communication bandwidth is limited, and a {\em complexity-regularizer} based on the Minimum Description Length (MDL) principle to facilitate the emergence of computationally bounded signaling. %punish the information that emerges from high-complexity deep learning models.
%incorporates a bandwidth-adaptive importance-filter that prioritizes transmitting information with high task contribution when communication bandwidth is limited. SANEmerg also employs a complexity-regularizer grounded in the Minimum Description Length (MDL) principle to facilitate computationally bounded communication emergence. 
The performance of SANEmerg is evaluated on an AgentNet prototype. Results from extensive experiments show that the proposed SANEmerg significantly outperforms existing methods, delivering substantial gains in task accuracy while reducing bandwidth and computational resource requirements.  %a novel architecture, called SANEmerg, 

We summarize the main contributions of this paper as follows:
% \begin{itemize}
%     \item[] 

\noindent{\bf (1) SANEmerg-based framework for multi-agent emergent communication:} We propose a multi-agent emergent communication framework, SANEmerg, to support the learning and generation of communication messages among autonomous agents that work collaboratively to meet the user's semantic goal. We formulate the optimization problem for SANEmerg subject to both communication bandwidth and computational constraints. 

\noindent{\bf (2) Bandwidth-adaptable importance-filter design:} We introduce an importance-filter-based emergent communication design in which the volume of messages exchanged between agents can be adjusted based on the dynamics of communication bandwidth. Our proposed filter assigns weights to each output message dimension based on its contribution to the task-specific optimization objective. When bandwidth is limited, each agent always prioritizes transmitting the high-contribution dimensions of the output messages. % when bandwidth is limited. 

\noindent{\bf (3) Complexity-regularizer for computationally limited communication emergence:} We adopt the MDL as the main metric for evaluating the computational complexity at each agent for communication emergence. A variable bound for approximating the MDL is derived, and a complexity-regularizer is introduced to allow computationally bounded emergence of multi-agent communication. We introduce a unified training and learning framework to jointly optimize the task objective, importance-filter, and complexity-regularizer. This makes it possible to investigate the tradeoff among task performance, bandwidth constraints, and computational limits, as well as its impact on emergent communication for AgentNet. 

\noindent{\bf (4) Prototype and experiments: } We develop a hardware prototype and conduct extensive experiments based on the dataset collected from our prototype. Our experimental results show that SANEmerg achieves up to 28\% improvement over existing emergent communication solutions while reducing communication bandwidth and computational overhead by 46\% and 52\%, respectively.

\section{Preliminary} %: Emergent Communication with Computationally Bounded Intelligence}
Quantifying the computational complexity of deep learning models and their impact on the emergence of intelligence is known to be an extremely challenging task. Unlike traditional communication systems, where protocols are predefined and fixed, emergent communication involves autonomous agents that need to develop shared signaling conventions through end-to-end training before or during the communication process. Consequently, the complexity of these protocols is not merely a function of the communication channel but is deeply intertwined with the agents' computational powers, the dimensionality of the learned representations, and the specific dynamics of the multi-agent environment. While several metrics have already been established to evaluate the model complexity and computational overhead, such as Floating Point Operations (FLOPs), model parameter volume, and computational latency, they exhibit significant shortcomings when applied to a general multi-agent emergent communication system, e.g., FLOPs and latency are hardware-dependent, and the model parameter volume only quantifies the static capacity of a model rather than the dynamic complexity of the information being processed and transmitted. Furthermore, these conventional metrics are often ``blind" to the context of the task. A robust and comprehensive complexity measure for agentic AI networking must be inherently task-, environment-, and agent-specific. It must account for the fact that the ``cost" of learning a communication protocol is inextricably linked to its contribution in fulfilling a user's goal under a specific environment. % specific user requirement. 

Recent advancements in information theory have identified the Minimum Description Length (MDL) as a potent metric for quantifying the complexity of deep learning models. As mentioned in \cite{finzi2026epiplexity}, traditional Shannon information and Kolmogorov complexity often fail to characterize useful information content because they assume observers with unlimited computational capacity. In contrast, for a ``computationally-bounded observer," information is not a static property of the raw data but a dynamic construct emerging from stochastic transformations. The basic idea of MDL is to characterize the complexity of a model (or a message) by the shortest description required to represent the data. It provides us with a useful tool to capture the semantic structural information that can be extracted from raw observations. By using MDL, we can distinguish between the meaningful structural content,  requiring the computational effort to ``discover", and random noise, providing a rigorous foundation for measuring the complexity of signals that emerge within an AgentNet system.

The formal introduction of MDL as a complexity measure is most effectively achieved through the Information Bottleneck (IB) principle. The IB framework treats the generation of an emergent message $\bc$ from an observed state $s$ as an optimization problem that balances compression against the preservation of task-relevant information $Y$. More formally, given an observation $s$, an emergent information representation $\bc$, and a task-specific target $Y$, the goal is to find a stochastic mapping $p(c|S)$ that minimizes the following Lagrangian objective:
\begin{eqnarray}
\mathcal{L}_{IB} = I(s; \bc) - \epsilon I(\bc; Y),
\end{eqnarray}
where $I(\cdot; \cdot)$ is the mutual information, and $\epsilon$ is a Lagrangian multiplier that controls the trade-off between information representation and task relevance.

In this formulation, the term $I(s; \bc)$ represents the complexity of the emergent communication protocol. It provides the lower bound for the average description length required to communicate the representation $\bc$ given the observed state $s$. By minimizing $I(s; \bc)$, the agents are forced to find the ``minimum description" that retains the maximum semantic utility $I(\bc; Y)$ for the task at hand. 

The application of MDL to quantify the computational complexity of emergent communication provides a comprehensive and hardware-agnostic metric that moves beyond simple counting of operations to a more profound understanding of the ``computational cost" of emergent communication. By grounding complexity in the MDL and IB frameworks, we can rigorously investigate the interplay between limited resources and the quality of multi-agent cooperation. % This approach ensures that the emergent communication protocols are not only task-effective but also optimally compressed to meet the stringent computational and bandwidth constraints inherent in practical, real-world networking systems. 

\section{System Model and Problem Formulation}
\label{Section_SystemModel}

\begin{figure}
\centering
\includegraphics[width=1\linewidth]{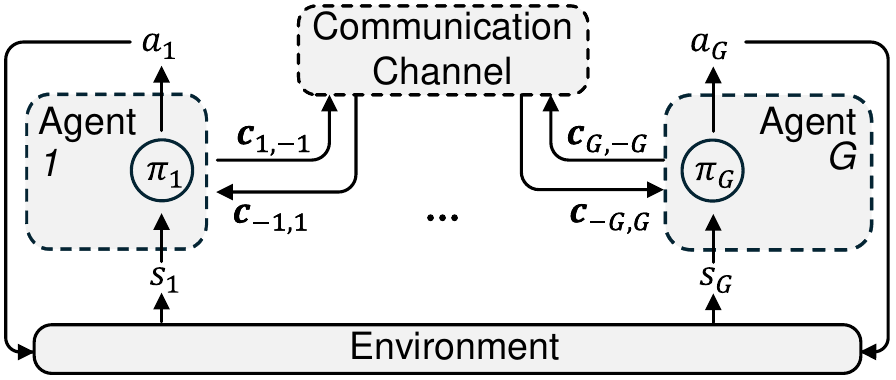}
\caption{System model of a general multi-agent emergent communication system.}
\label{Figure_SANEmergModel}
\end{figure}

\subsection{System Model}
We consider a general emergent communication model for AgentNet systems in which the agents are not given any pre-defined communication languages or protocols, but must autonomously develop a signaling convention or protocol that maps local observations to discrete or continuous representations to facilitate the collaborative optimization of a shared task, as illustrated in Fig. \ref{Figure_SANEmergModel}. 

More formally, we define a general multi-agent emergent communication system as a tuple $\langle {\cal Y},  {\cal G}, {\cal A}, {\cal S}, \bc \rangle$, consisting of the following key elements:  
\begin{itemize} 
\item ${\cal Y}$ is a set of $N$ tasks. Each task can be divided into sub-tasks to be solved collaboratively by a set of agents. % each of which has a specific optimization objective defined by a specific loss function.  
    
\item ${\cal G}$ is the set of agents. Each agent $i \in {\cal G}$ is an autonomous entity that observes its local state, reasons about how to solve a task, and takes action accordingly. The optimization objective of each agent $i$ is defined by a loss function ${\cal L}_{i} \left( a_i, s_i, \bc_{-i,i} \right)$ where $a_i$ is the action, $s_i$ is the observed state, and $\bc_{-i,i}$ is the communication messages sent by other agents to agent $i$.

\item ${\cal A}$ is the action space that defines the set of actions that can be taken by the agents. We can write ${\cal A} = \{ {\cal A}_i \}_{i \in {\cal G}}$ where ${\cal A}^i$ is the set of actions that can be performed by agent $i$, for $a_i \in {\cal A}^i$. 
    
\item ${\cal S}$ is the state space that defines the states that can be observed by the agents. We have ${\cS} = \{ {\cal S}_i \}_{i \in {\cal G}}$ where ${\cal S}^i$ is the set of states observed by an individual agent $i$, for $s_i \in {\cal S}^i$. 

\item $\bc$ is the emergent communication messages learned by a set of agents to exchange information with each other to help with their collaborative task solving. 
    Let $\bc_{i,j}$ be the learned messages sent from agent $i$ to $j$. We also write $\bc_{i,-i} = \{ c_{i,j} \}_{j\in {\cal G}, i \neq j}$ as the messages learned by agent $i$ to communication with all the other agents. 
    
\item ${\pi}_i$ is the decision making function for each agent that maps the current observation and the information received from other agents to its action and messages generated for other agents, i.e., we can write ${\pi}_i: \langle {s_i, \bc_{-i,i}} \rangle \rightarrow \langle a_i, \bc_{i,-i} \rangle$. In this paper, we consider a deep-learning-based emergent communication system in which the decision-making function corresponds to a deep learning model parameterized by $\langle \bphi_i, \bpsi_i \rangle$ for state-to-action mapping $\pi^A_{\bphi_i}: \langle s_i, \bc_{-i,i}\rangle \rightarrow a_i$ and state-and-messages-to-message mappings $\pi^C_{\bpsi_i}: \langle s_i, \bc_{-i,i}\rangle \rightarrow \bc_{i,-i}$, i.e., we can write ${\pi}_i = \langle {\pi}^A_{\bphi_i}, {\pi}^C_{\bpsi_i} \rangle$. In this case, we can rewrite the loss function of each agent $i$ as ${\cL}_{i} (a_i, s_i, \bc_{-i,i}; \bphi_i, \bpsi_{-i})$. 
\end{itemize}

    %Each task $\tau \in {\cal T}$ defines a specific goal specified by a loss function. Let ${\cal L}$ be the set of loss functions.  % ${\cal L}^{\tau}$. % that can be solved by an agent. %a collaborative set of agents ${\cal G}^{\tau}$. 
    % \item $\bGamma$ is the loss functions of the agents. We can write $\bGamma = \langle \Gamma_{i \in {\cal G}} \rangle$. 
    %More specifically, when a collaborative set of agents ${\cal G}^{\tau}$ is selected to solve task $\tau$, communication messages are learned and generated to support information exchanges among agents. 
 %We can write ${\cal C} = \{ {\cal C}^{\tau} \}_{\tau \in {\cal T}}$ where ${\cal C} = \{ c^{\tau}_{i,j} \}_{i, j\in {\cal G}^{\tau} }$. 
 
The above model is very general and can be extended to investigate the emergent communication messages in a wide variety of multi-agent networking scenarios. For example, our model can be directly extended to the large-scale emergent communication systems considered in \cite{Chaabouni2022EmergComAtScale}, in which a set of ``speaker" agents tries to autonomously learn a communication language to describe a target image object to a set of ``listener" agents. In this case, the agents would include both speakers and listeners, where speakers and listeners are dynamically paired to engage in a ``discrimination game". The general task set ${\cal Y}$ would then correspond to identifying image objects for the listeners, and popular loss functions, such as cross-entropy and policy gradient losses, can be adopted to drive learning of the emergent messages. The decision-making functions of the agents can naturally support the population dynamics adaptation policies, such as the imitation and voting policies. %, where agents refine their internal mappings based on the successful strategies of their peers. This flexibility demonstrates that 
Furthermore, our system model is not only capable of representing signaling games, commonly considered in traditional emergent communication scenarios, but is also general enough to investigate more complex emergent communication scenarios, such as the strategic communication\cite{xiao2022RateDistortion}. % support scaling in datasets, task complexity, and population size required for as will be discussed later in this paper.

%We can also observe that the above formulation is also fundamentally different from the interactive POMDP due to the following reasons. 

%The operation procedure of an AgentNet is described as follows. 

\subsection{Problem Formulation}

In this paper, we investigate the emergence of communication protocols and message structures within a practical AgentNet architecture \cite{XY2026SANet}, in which the communication bandwidth between agents, as well as the computational resources available to each agent for decision-making and message emergence, are under the following constraints: % communication messages and protocols that can be learned in a practical AgentNet system in which the communication between agents is limited by the communication and computational resource constraints. %Also, each agent has a limited computational resource for decision making and generating emergent communication messages. Therefore, in this paper, 
%More formally, we define the following two constraints on communication and computational resources for SANEmerg: 

\noindent{\bf C1) (Communication) bandwidth constraint (B-constraint): } The communication messages learned by each agent to be sent to other agents should meet the bandwidth constraints between agents, i.e., the dimensional size of the learned message sent from one agent to another needs to satisfy % the following constraint: 
  \begin{eqnarray}
  % \nonumber % Remove numbering (before each equation)
    {\rm Dim} \left( \bc_{i,j} \right) &\le& B_{ij}, 
    \label{eq_Bconstraint}
  \end{eqnarray}
  where ${\rm Dim}\left( \bc_{i,j} \right)$ is the dimensional size of message $\bc_{i,j}$ sent from agent $i$ to $j$ and $B_{ij}$ is the maximum dimensional size allowed by the bandwidth constraint between agents $i$ and $j$. 
  
\noindent{\bf C2) (Computational) complexity constraint (C-constraint):} The computational resource required for the emergence of communication should also be limited. In this paper, we adopt the MDL as the main metric for measuring the required computational complexity of the deep learning models adopted by each agent to learn and generate communication messages. We can therefore write the C-constraint for a SANEmerg system as follows:
\begin{eqnarray}
%\nonumber % Remove numbering (before each equation)
I \left( s_{i}; \bc_{i,j} \right) &\le& C_{ij}, \forall i, j \in {\cal G}, 
\label{eq_Cconstraint}
\end{eqnarray}
where $s_{i}$ is the local state information observed by agent $i$ and $C_{ij}$ is the maximum computational complexity allow by agent $i$ to generate messages for communicating with agent $j$. % and the $c_{i,j}$ is the 

The main objective of emergent communication is to allow a set of agents to autonomously learn communication protocols to optimize their collaborative goal, i.e., suppose a set of agents $\cal G$ is selected to collaboratively solve a task. We can therefore write the optimization problem as: 
\begin{eqnarray}
&&{\bf P1:}\;\;\; \min_{\bphi, \bpsi}  \sum_{i \in {\cG}} {\cL}_{i} (a_i, s_i, \bc_{-i,i}; \phi_i, \bpsi_{-i}) \\
&&\;\;\;\;\;\;\;\;\;\; \mbox{ s.t. } \; \mbox{Constraints in } (\ref{eq_Bconstraint}) \mbox{ and } (\ref{eq_Cconstraint}).
\end{eqnarray}
where $\bphi = \langle \phi_i \rangle_{i\in {\cal G}}$ and $\bpsi = \langle \psi_i \rangle_{i\in {\cal G}}$ are the model parameters learned by the set of agents for making decisions and emergent communications. 

We can observe that the problem {\bf P1} is different from the previous AgentNet solutions, which focus on decentralized solutions\cite{XY2026SANet}, such as Pareto-optimal solutions. This is because it is known that decentralized optimization with no or limited information among agents often results in various equilibrium solutions, e.g., Pareto-optimal solutions, which are generally non-optimal. In SANEmerg, however, we investigate the potential benefits that can be achieved by allowing communication among agents, where, in the ideal case, the communication and interaction among agents should result in a globally optimal solution. 

It can be observed that the problem {\bf P1} is generally difficult to solve due to the following reasons. First, bandwidth availability between agents can be dynamic, and most existing emergent communication solutions either ignore the impact of communication bandwidth constraints or require model retraining when the bandwidth between agents changes. Second, each agent generally has a limited computational power and can only support communication emergence with a finite computational complexity. Currently, there is still a lack of a unified emergent communication that can support agents with computationally bounded intelligence. Finally, most existing solutions in AI-based communication require separately training dedicated coding schemes and networking protocols for every task and environmental deployment. These solutions then often focus only on the performance improvement when the pre-trained models are deployed, while ignoring the computational and communication overhead associated with the model training phase.  % with both communication and computational constraints at each agent 

%The above  at the agents make the emergent communication problem extremely challenging to solve, especially for the AI model-based emergent communication. First, the previous solution suggests that for each given bandwidth and computational constraints, the emergent communication protocol should be re-trained, which is generally impossible for most practical networking scenarios where the bandwidth can be dynamically changed. Second, 

%In the remainder of this paper, we investigate the emergence of communication among agents under various communication and computational constraints. 

\section{SANEmerg Architecture} % and Emergent Communication}
\label{Section_SANEmergArchitecture}
\subsection{SANEmerg Architecture}
We propose SANEmerg, an emergent communication framework that supports the learning and generation of communication messages for dynamic, goal-driven collaboration among autonomous agents. The workflow of the SANEmerg consists of three sequential phases, which allow the system to continuously and autonomously adapt to heterogeneous user requests and environmental conditions, as described as follows: (1) {\em Semantic-aware task/goal cognition:} Each agent keeps track of the localized user inputs or service requests. Upon detecting a stimulus, e.g., identifies the user's underlying goal, such as ``optimize the quality-of-experience for an immersive communication service" from an application interface, the receiving agent initiates the semantic-aware task/goal cognition phase, % The initial stage of the SANEmerg focuses on 
translating the raw input, e.g., the high-level semantic intent identified from the user's input, into standardized, actionable task representations. The agent evaluates its local computational and functional capabilities against the semantic requirements of the identified task and then determines whether the task objective can be fulfilled independently or requires assistance from other agents. If multi-agent collaboration is required to fulfill the task, e.g., the user goal identified by an application-layer agent requires the coordinated optimization of the network routing configuration controlled by the network-layer agents, and the spectrum and transmit power allocation managed by the physical-layer agents, the workflow transitions to the subsequent phase.
%
%set of objectives that are actionable for the agents. More specifically, an agent controller, powered by a Large Language Model (LLM), first performs semantic reasoning based on the user's input, e.g., identifies the user's underlying goal, such as ``optimize the quality-of-experience for an immersive communication service". It will then break down the identified semantic goal into a set of sub-tasks. For example, in the multi-agent cross-layer optimization case considered in \cite{XY2026SANet}, the identified user goal is decomposed into a set of sub-tasks, including the feature extraction (Application Layer), routing selection (Network Layer), and spectrum and transmit power allocation (Physical Layer). 
%
(2) {\em Agent selection and communication emergence: } To facilitate the multi-agent collaboration, the initiating agent first decomposes the primary semantic goal into a structured set of discrete sub-tasks that are actionable for a set of specialized agents. The assignment of sub-tasks and agents can be orchestrated in conjunction with an agent controller that maintains a comprehensive registry of the skill sets or functional capabilities of all agents within the network. Furthermore, this controller maintains a repository of previously learned emergent communication protocols, systematically indexed by task types, environmental contexts, and combinations of agents. If it identifies a match for the current task, environment, and agent combination, the previously established protocol will be retrieved and deployed.
Otherwise, if the requested task, environments, or agent collaboration is novel, it triggers the communication emergence procedure. During this stage, the selected agents will establish a new language or communication protocol for interaction and collaboration under the current environment from scratch. This newly developed emergent protocol and messages % will be optimized for the specific sub-tasks and environmental states of the collaborative agents, e.g., the application-interface and supported configurations of application-layer agents, and the channel-state-information (CSI) or signal-to-noise ratios of the physical-layer agents. The emergent messages 
can be discrete or continuous vectors that represent the state information most relevant to the shared goal. Agents exchange their learned representations via communication channels, and through iterative learning and updating, the agents learn which ``signals" lead to successful collaborative outcomes, effectively evolving toward the emergence of a communication protocol that is optimized for the task and the dynamics of the environmental states of the collaborative agents, e.g., the application-interface and supported configurations of application-layer agents, and the channel-state-information (CSI) or signal-to-noise ratios of the physical-layer agents.
%
%
%and is structurally bounded by the prevailing features and physical constraints of the environment. %undergo a joint optimization process to evolve a bespoke signaling convention or ``language" from scratch. 
%
%This newly developed emergent protocol is strictly optimized for the specific sub-tasks and is structurally bounded by the prevailing physical constraints of the environment.
%
%Following the decomposition of the user's semantic goal, the decomposed sub-tasks will be mapped to a set of specialized agents, e.g., agents in application, network, and physical layers. These agents need to establish a common language for interaction and collaboration. Each agent autonomously develops a ``signaling convention" by mapping its locally observed environmental states (e.g., the application-interface and supported configurations of application-layer agents, and the channel-state-information (CSI) or signal-to-noise ratios of the physical-layer agents) into message outputs. These are discrete or continuous vectors that represent the state information most relevant to the shared goal. Agents exchange their learned representations via communication channels, and through iterative learning and updating, the agents learn which ``signals" lead to successful collaborative outcomes, effectively evolving toward the emergence of a communication protocol that is optimized for the task and the dynamics of the communication channels. 
%
(3) {\em Task execution and decision-making: } Following the successful convergence of the emergent communication protocol, the multi-agent collaboration transitions from the learning phase to real-time operational deployment. In this case, the selected agents will exchange learned communication messages, coordinate their localized policies, and execute their sub-tasks synchronously to fulfill the overarching user objective. This collaboration will continue until the identified task is resolved. 

The above workflow will be automatically reinitiated whenever any agent in the network detects a new user or service request. % from its local user or interfacing service.

%SANEmerg workflow ensures that the multi-agent collection remains highly responsive to both volatile user demands and fluctuating physical resources. 

%Once all the agents have finished learning and a communication protocol has emerged among agents for the collaboration, the agents will start to exchange the learned communication messages in real-time, and each agent should then interpret the incoming signals from others to gain a ``global" understanding of the environmental state. Based on the collective intelligence, agents can independently execute local actions, and the decentralized actions converge to satisfy the original semantic goal identified in step (1). 

%The operational procedures are illustrated in Fig. ?. 

%Existing agentic AI solutions, such as Manus and OpenClaw, have already shown promising results in semantic goal recognition and sub-task separation. Therefore, in this paper, we focus on agentic execution and assume a certain goal has already been identified and a set of agents has already been chosen. We will then focus on investigating the emergence of communication among collaborative agents. The joint optimization of the goal identification, sub-task separation, and emergent communication will be left for our future work. 

\subsection{Emergent Communication}
%As mentioned earlier, problem P1 is in generally difficult to solve due to the . Therefore, before we present the detailed approach to learn the emergent communication protocols among agents, let us first convert the problem P1 into a

To address these challenges of problem {\bf P1}, we would like to design an emergent communication framework that meets the semantic goal of the users with the physical constraints of the network, i.e., an ideal framework should possess three properties: 

\noindent{\bf (1) Bandwidth adaptable:} Since the communication bandwidth between agents is limited and can be dynamically changing, the emergent communication protocol is therefore required to support dynamic adaptation according to available bandwidths without requiring model re-training. 

\noindent{\bf (2) Computational complexity-limited: } Since agents typically operate under finite power and processing limits, they can only support a restricted amount of computational overhead. The emergent communication protocol must be deployable on agents with fixed processing budgets. %Prior research has indicated that the quality and richness of emergent communication are profoundly influenced by these complexity bounds, yet a unified framework for investigating this relationship across varying constraints is still largely absent. 

\noindent{\bf (3) Scalable and efficient to learn and deploy: } Most previously proposed emergent communication solutions required the development of delicate modules for coding, feature extraction, and recovery. It is critical to develop a novel solution that minimizes the time and resources required to build these delicate modules for practical deployment, as the overhead of the learning phase can often offset the gains in communication efficiency.

In the rest of this section, we introduce two key components of SANEmerg that solve the above challenges: an importance-filter for bandwidth adaptation and a complexity-regularizer to enforce computational complexity constraints. 

\subsubsection{Importance-filter for bandwidth adaptation} 
Let us first introduce a learnable importance-filtering mechanism that prioritizes the communication of task-relevant information when the bandwidth between agents is limited. More specifically, we follow the same approach as in \cite{XY2026SANet} to develop a learnable importance-filter that assigns dynamic weights to each output message dimension based on its contribution to the global optimization objective. Under high bandwidth availability, the mechanism enables the exchange of high-dimensional information, facilitating near-lossless observation exchange among agents for accurate decision-making. As bandwidth becomes limited, the importance-filter selectively discards dimensions with low task relevance to ensure that critical semantic information is preserved, thereby maintaining robust system performance and minimizing the adverse effects of communication bottlenecks on agents' collaborative actions.

More formally, suppose the message learned by agent $i$ for communicating with agent $j$ has dimensional size  ${\rm Dim} (\bc_{i,j})$. Each agent $i$ needs then learn an importance score $\kappa_{i,d}$ to quantify the contribution of each dimension of its output message for $d \in \{1, \ldots, {\rm Dim} (\bc_{i,j})\}$ to achieve the task goal by incorporating an exponential regularizer $\exp \left( \sum^{{\rm Dim} (\bc_{i,j})}_{d=1} \kappa_{i,d} \right)$ into the sub-task loss function of the agent $i$, i.e., the loss function of the optimization objective is then given by
\begin{eqnarray}
% \nonumber % Remove numbering (before each equation)
\cL'_i = {\cL}_{i} (a_i, s_i, \bc_{-i,i}) + \epsilon_i {\cL}^B_{i} \left( \bkappa_i \right),
\end{eqnarray}
where $\epsilon_i$ is the coefficient of the importance-filter regularizer. It determines how the filter is weighted against the task loss during the training phase. ${\cL}^B_{i}$ is given by,
\begin{eqnarray}
{\cL}^B_{i} \left( \bkappa_i \right) &=& \exp \left( \sum^{{\rm Dim} (c_{i,j})}_{d=1} \kappa_{i,d} \right).
\label{eq_LossofBconstraint}
\end{eqnarray}

Jointly optimizing the task loss and the regularizer encourages each agent to identify a minimal yet highly informative subset of dimensions in its learned message. If a specific dimension $d$ does not contribute significantly to minimizing the sub-task loss $\mathcal{L}_{i}$, the optimization process will reduce the value of $\kappa_{i,d}$ to minimize the penalty, effectively ``turning off" or de-emphasizing that dimension of the message vector. 

As shown in the experimental results later, our proposed importance-filter significantly reduces the impact of bandwidth limitations on the collaborative decision-making accuracy, especially compared to other existing solutions.

\subsubsection{complexity-regularizer for complexity-limited emergence} 

As mentioned earlier, directly optimizing the mutual information in C-constraint is notoriously difficult in deep learning frameworks because the marginal distribution of the emergent messages is generally intractable. To address this challenge of the C-constraint, let us first introduce a variational upper bound to approximate the C-constraint.

\begin{theorem}(Variational Upper Bound of C-constraint)
\label{Theorem_VariationalBoundCconst}
Let $p(c_{i,j}|s_i)$ be the conditional probability distribution of emergent messages $c_{i,j}$ given the observed state $s_i$ for agent $i$ and $q(c_{i,j})$ be an arbitrary prior distribution in the message space. The mutual information $I(s_i; c)$ is upper-bounded by the expected Kullback-Leibler (KL) divergence between the encoder output and the prior:
\begin{eqnarray}
I(s_i; c_{i,j}) \le \mathbb{E}_{s_i \sim p(s_i)} \left[ D_{KL} \left( p(c_{i,j}|s_i) \parallel q(c_{i,j}) \right) \right],
\label{eq_CconstraintApprox}
\end{eqnarray}
\end{theorem}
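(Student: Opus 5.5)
The plan is the standard variational argument: rewrite the mutual information as an expected KL divergence to the true message marginal, then swap in the arbitrary prior $q$ and pay for it with a non-negative KL term. Let $p(c_{i,j}) = \int p(s_i)\, p(c_{i,j}|s_i)\, ds_i$ denote the marginal distribution of the emergent message induced by the encoder. By definition,
\begin{eqnarray}
I(s_i; c_{i,j}) = \mathbb{E}_{s_i \sim p(s_i)} \left[ D_{KL}\left( p(c_{i,j}|s_i) \parallel p(c_{i,j}) \right) \right].
\end{eqnarray}
The whole proof consists of comparing this expression with the same expectation taken against $q(c_{i,j})$ instead of $p(c_{i,j})$.

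The key step is to expand the right-hand side of (\ref{eq_CconstraintApprox}). Multiply and divide inside the logarithm by $p(c_{i,j})$:
\begin{eqnarray}
\mathbb{E}_{s_i}\left[ D_{KL}\left( p(c_{i,j}|s_i) \parallel q(c_{i,j}) \right)\right] = \mathbb{E}_{p(s_i, c_{i,j})}\left[ \log \frac{p(c_{i,j}|s_i)}{p(c_{i,j})} \right] + \mathbb{E}_{p(s_i,c_{i,j})}\left[ \log \frac{p(c_{i,j})}{q(c_{i,j})}\right].
\end{eqnarray}
The first term is exactly $I(s_i;c_{i,j})$. In the second term the integrand depends only on $c_{i,j}$, so after marginalizing out $s_i$ it collapses to $D_{KL}(p(c_{i,j}) \parallel q(c_{i,j}))$. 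This gives the identity
\begin{eqnarray}
\mathbb{E}_{s_i}\left[ D_{KL}\left( p(c_{i,j}|s_i) \parallel q(c_{i,j}) \right)\right] = I(s_i;c_{i,j}) + D_{KL}\left(p(c_{i,j}) \parallel q(c_{i,j})\right).
\end{eqnarray}
Gibbs' inequality, i.e. non-negativity of KL divergence (equivalently Jensen's inequality applied to $-\log$), then yields the bound. It also shows the bound is tight exactly when $q$ equals the true marginal.

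The only delicate point is measure-theoretic bookkeeping rather than any real obstacle. The splitting of the logarithm requires $p(c_{i,j}|s_i) \ll q$ for almost every $s_i$. If this absolute continuity fails, the right-hand side is $+\infty$ and the inequality holds trivially. If it holds, then $p(c_{i,j}) \ll q$ as well, so every term is well defined. The exchange of integration order (Fubini/Tonelli) is justified by splitting the logarithms into positive and negative parts. I would remark on this briefly and handle the discrete and continuous message cases uniformly by working with densities with respect to a common dominating measure.
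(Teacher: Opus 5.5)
Your proof is correct and is essentially the paper's argument: both rest on the single fact $D_{KL}(p(c_{i,j}) \parallel q(c_{i,j})) \ge 0$, which the paper uses in the form $\mathbb{E}[\log p(c_{i,j})] \ge \mathbb{E}[\log q(c_{i,j})]$ to replace the true message marginal by $q$ inside $I(s_i;c_{i,j})$. Your version writes the gap as an exact identity, $\mathbb{E}_{s_i}[D_{KL}(p(c_{i,j}|s_i)\parallel q(c_{i,j}))] = I(s_i;c_{i,j}) + D_{KL}(p(c_{i,j})\parallel q(c_{i,j}))$, which also gives the tightness condition $q = p(c_{i,j})$; and because you split into two KL-type terms rather than into $\mathbb{E}[\log p(c_{i,j}|s_i)]$ and $\mathbb{E}[\log p(c_{i,j})]$ separately, your argument avoids a possible $\infty - \infty$ that the paper's split can run into for continuous messages.
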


\begin{IEEEproof}
By definition, $I(s_i; c_{i,j}) = \mathbb{E}_{s_i,c_{i,j}} \left[ \log \frac{p(c_{i,j}|s_i)}{p(c_{i,j})} \right]$. Since $D_{KL}(p(c_{i,j}) \parallel q(c_{i,j})) \ge 0$, it then follows that $\mathbb{E}_{s_i} [ \log p(c_{i,j}) ] \ge \mathbb{E}_{s_i} [ \log q(c_{i,j}) ]$. Substituting this inequality into the mutual information formula yields: $I(s_i; c_{i,j}) \le \int p(s_i) \int p(c_{i,j}|s_i) \log \frac{p(c_{i,j}|s_i)}{q(c)} d c_{i,j} d s_i$ which can then be simplified to the expected KL divergence term in the right-hand-side of (\ref{eq_CconstraintApprox}). 
\end{IEEEproof}

By choosing a tractable prior, e.g., a standard multivariate Gaussian, for $q(c_{i,j})$, we can effectively penalize the complexity of the message $c_{i,j}$ during training. As will be shown in the experimental results section, the above upper bound is quite tight in approximating the MDL of the computational complexity. 

Building on the variational upper bound established in Theorem \ref{Theorem_VariationalBoundCconst}, we can mathematically formalize how communication messages emerge from computational processes. In this framework, the locally observed environmental state $s_i$ contains a vast amount of entropy, much of which is structurally indistinguishable from the noise to a computationally bounded observer. The act of ``learning" or ``emergence" is characterized by the feature extraction distribution $p_{\pi_{\bphi_i}}(c_{i,j}|s_i)$, which maps the high-dimensional state information $s_i$ to a low-dimensional feature $c_{i,j}$ to be sent from agent $i$ to agent $j$. From Theorem \ref{Theorem_VariationalBoundCconst}, we can observe that the complexity of this emerged information is bounded by the KL divergence between the learned representation and a priori knowledge, i.e., $\mathbb{E}_{s_i \sim p(s_i)} \left[ D_{KL} \left( p_{\pi_{\bpsi_i}}(c_{i,j}|s_i) \parallel q(\bc_{i,j}) \right) \right]$. This term represents the MDL that the system can effectively utilize. In particular, when the computational resources of each agent are limited by constraint $C_{ij}$ defined in (\ref{eq_Cconstraint}), the agent $i$ is forced to discard non-essential data and retain only the most critical structural regularities. This further implies that the semantic information is not a static property residing in the observed state $s_i$, but is a ``computational product" emerged from the optimization of the variational bound. The volume of the emerged information is directly proportional to the computational complexity of model $\pi_{\bphi_i}$. By increasing the computational complexity allowed for the agent $i$, i.e., a larger value of $C_{ij}$, the agent can resolve more intricate patterns in its observed state $s_i$, effectively ``creating" more learnable semantic content. Thus, the variational upper bound is not simply a measure of the information transmission limit; it quantifies the threshold at which the observed state is transformed into actionable intelligence through bounded computation.

By replacing the B- and C-constraints with (\ref{eq_LossofBconstraint}) and (\ref{eq_CconstraintApprox}), we can rewrite the problem {\bf P1} into the following Lagrangian form:
\begin{eqnarray}
% \nonumber % Remove numbering (before each equation)
\lefteqn{ \mbox{\bf P2: } \;\; \min_{\bphi, \bpsi, \bkappa}  \sum_{i \in {\cG}} \left[ {\cL}_{i} (a_i, s_i, \bc_{-i,i}; \phi_i, \bpsi_{-i}) + \right. } \nonumber \\ 
&&\;\;\;\;\;\;\;\;\;\;\;\;\;\;\;\;\;\;\;\;\;  \left. \epsilon_i {\cL}^B_{i} \left( \bkappa_i \right) + \epsilon'_i {\cL}^C_{i} \left( \bpsi_i \right)  \right],
\end{eqnarray}
where $\epsilon_i$ and $\epsilon'_i$ are weighting coefficients of the importance-filter and complexity regularization. They control the trade-off between task accuracy and communication-computational resource efficiency. ${\cL}^B_{i} \left( \bkappa_i \right)$ is given in (\ref{eq_LossofBconstraint}) and ${\cL}^C_{i} \left( \psi_i \right)$ is given by 
\begin{eqnarray}
{\cL}^C_{i} \left( \psi_i \right) = \mathbb{E}_{s_i \sim p(s_i)} \left[ D_{KL} \left( p_{\pi_{\bpsi_i}}(c_{i,j}|s_i) \parallel q(c_{i,j}) \right) \right]. 
\end{eqnarray}

We can observe that the above framework offers several unique advantages for emergent communication in AgentNet systems. First, by consolidating the agent's task objectives and generated message protocols into a unified training objective, this framework eliminates the need for a separate development phase for communication protocol modules. This integration significantly enhances scalability and reduces the training latency and overhead traditionally associated with isolated communication modules. Second, the inclusion of the importance-filter enables a ``train once, deploy anywhere" capability; the protocol can adapt to fluctuating bandwidth constraints in real-time by simply masking dimensions with low importance scores, ensuring robust performance without retraining. Finally, the MDL bound-based regularizer provides a theoretical foundation for investigating how computational complexity shapes the emergence of communication. In this paradigm, the ``meaning" captured by the system is not a static reflection of the source data observed by the agents, but a dynamic product of the resources invested in the emerging process. Greater computational effort, i.e., higher computational complexity, facilitates the discovery of deeper structural regularities, expanding the description length that agents can effectively utilize in the message emergence. %In this way, semantic information can be treated as the outcome of the interplay between raw observations and the processing complexity of agents.

% \section{Theoretical Analysis}

% Generalization: 

% Adaptation and Transferability: 

% Robustness: 

%\subsection{Communication Emergence with both B-Constraint}

%\subsection{Joint Decision Making and Emergent Communication Algorithm}

\section{Prototype and Experimental Results}
\label{Section_Prototype}

\begin{figure}[t]
\centering
\includegraphics[width=0.9\linewidth]{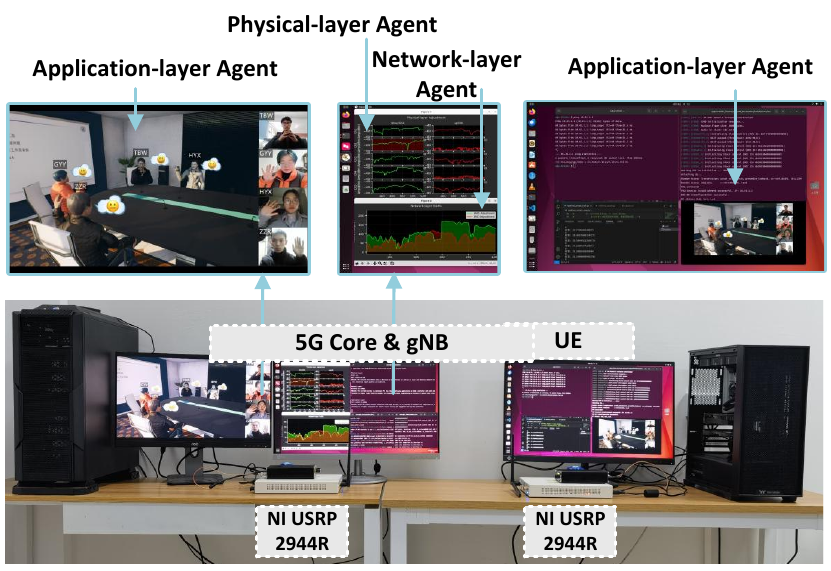}
\caption{An AgentNet prototype.}
\label{Figure_Prototype}
\vspace{-0.2in}
\end{figure}

\subsection{Prototype and Experiment Setup}
We developed an AgentNet prototype, consisting of user equipments (UEs), a gNodeB (gNB), and a 5G core (5GC) network, based on an open-source RAN and softwareized 5G core network platform, as shown in Fig. \ref{Figure_Prototype}.   
We investigate the communication emergence between an application-layer agent (aAgent) and a physical-layer agent (pAgent), in which the aAgent acts as the ``source of user's semantic intent" and the pAgent acts as the ``physical-layer resource actuator", in which the aAgent observes the user behavior and detects the semantic requests (e.g., ``I need a low-latency video stream for an online gaming" or ``I want to watch high-resolution video for long-video streaming") and will learn to evolve a communication language that translates the abstract user intent into communication messages sent to the pAgent. The pAgent will then learn to determine the most efficient physical resource block configuration to meet the user's intent based on the signal it receives from the aAgent. The datasets and models of both agents are described as follows: 
%\begin{itemize}
%\item[(1)] {

\noindent{\bf Application-layer agent (aAgent): } We adopt a publicly available data traffic dataset collected from commercial smartphones across five diverse mobile application categories\cite{choi2023ml}. These include live streaming (NaverNow), long-video streaming (Netflix), online conferencing (Zoom), online gaming (Battlegrounds), and game streaming (GeForce Now). To reflect the various requirements of these services, we evaluate performance using application-specific Quality-of-Experience (QoE) metrics. Specifically, we prioritize latency-based metrics for the online gaming and game streaming applications, while focusing on average throughput for the video streaming and online conferencing applications. The aAgent observes and predicts the types and real-time dynamics of the application traffic and learns to send optimized communication signals to the pAgent. The pAgent is expected to dynamically adjust and allocate resources to ensure that the underlying physical-layer configurations can support the application's real-time data traffic demands.

%developed an immersive 3D video streaming application, consisting of client applications installed at two UEs, each of which uploads its local 2D video of a human user to a server application installed at an edge server installed at the 5GC, which converts the UEs' 2D videos into corresponding 3D avatars in a metaverse environment. The 3D video generated by the server will be streamed back to the UEs to offer an immersive 3D conference experience. The and autonomously decides the optimal parameters of applications that stream the 3D video from the 5GC to the UEs. It also translates these intents into emergent signals sent to the pAgent. 

%\item[(2)] 
\noindent{\bf Physical-layer agent (pAgent): } The pAgent observes the physical-layer environment, e.g., CSIs of the 5G NR frequency bands between the UEs and gNB, and decides the most efficient physical resource block configuration to meet the user's intent based on the signal received from the aAgent. We feed the mobile traffic data packets generated by different mobile apps into the UE of our developed AgentNet prototype and evaluate the impact of different physical-layer resource configurations on the user's QoE. The reward is calculated based on the user's semantic goal fulfillment. If the pAgent allocates too few physical-layer resources, the user request fails (with a low reward). If it allocates too many, it wastes network energy/efficiency (lower reward). Optimal reward is achieved when the allocated physical-layer resource matches the application requirements. % intersection of task success and resource parsimony.
%\end{itemize}

 %. We can then evaluate the emergence of communication between pAgent and aAgent as they collaborate to  that can handle a variety of user requests   

\subsection{Experimental Results}

To evaluate the impact of emergent communication under various physical resource constraints, we compare the performance of the following schemes on our prototype across communication and computational complexity constraints. 

% \noindent{\bf Scheme 1 (No-EC):} This scheme represents traditional communication paradigms where a human-designed mapping exists between application-layer requirements and physical-layer resource blocks. The aAgent transmits pre-defined command IDs (e.g., ``High-Priority Video," ``Background Data"), and the pAgent executes a static resource allocation strategy for each ID. This baseline serves to demonstrate the lower bound performance and illustrates the limitations of rigid, non-adaptive protocols in handling complex, dynamic user intents. 

\noindent{\bf EC-SOTA:} This scheme follows the methodology developed in \cite{Lin2021EmergCommAutoEncoder}, in which, in addition to training their decision-making models, each agent also trains a dedicated autoencoder model to learn latent vectors for communication without explicit constraints on message sizes or model complexity. This scheme serves as the benchmark for the emergent communication. 

\noindent{\bf SANEmerg-IF:} This scheme employs the importance-filter described in Section \ref{Section_SANEmergArchitecture} to adapt to various bandwidth constraints but utilizes large unregularized Transformer-based models for both aAgent and pAgent. This scheme isolates the impact of the MDL-based complexity-regularizer on reducing computational overhead and ensuring the framework's feasibility for deployment under dynamic bandwidth constraints. 

\noindent{\bf SANEmerg-CR:} This scheme utilizes the MDL-based complexity-regularizer to optimize agent neural architectures but employs a standard, fixed-bandwidth communication link between the aAgent and pAgent. Comparing this to the full SANEmerg framework reveals the gain in communication efficiency and resilience provided by the Importance-Filter under fluctuating channel conditions.

\noindent{\bf SANEmerg-IF-CR:} This scheme incorporates both an importance-filter and the MDL-based complexity-regularizer to enable resource-efficient communication emergence.

% \subsubsection{Bandwidth limitations}

% \subsubsection{Complexity limitations}

\begin{figure}[t]
	\centering
	\includegraphics[width=0.7\linewidth]{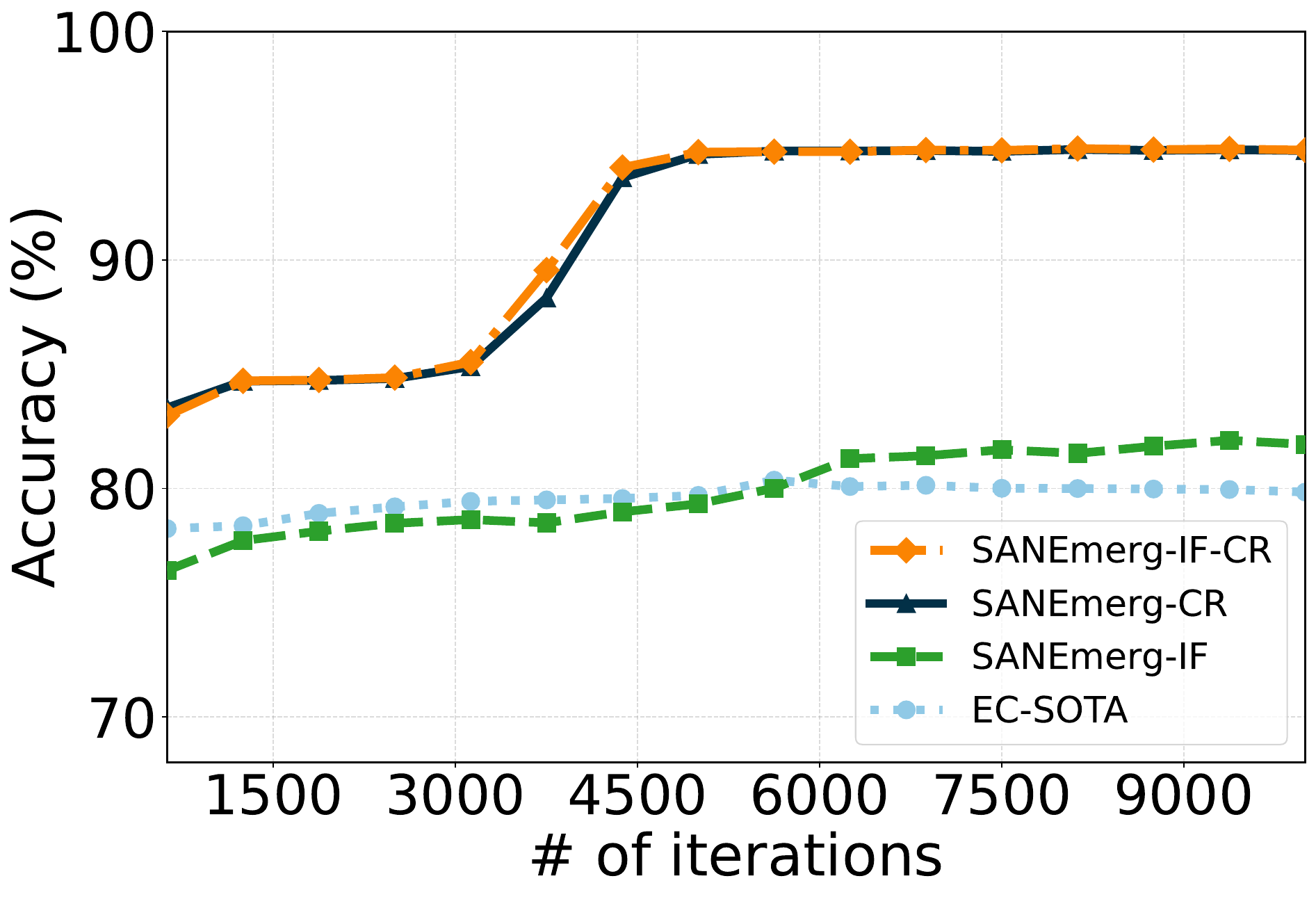}
	\caption{Comparison of convergence performance of the aAgent under different numbers of iterations. }
    \label{Fig_Comparison_of_convergence_performance}
\end{figure}

We first compare the convergence of the aAgent's accuracy when trained by different schemes across five distinct mobile application scenarios under different numbers of iterations in Fig. \ref{Fig_Comparison_of_convergence_performance}. We observe that the full SANEmerg implementation, i.e., SANEmerg-IF-CR, consistently achieves the highest accuracy across all tested applications, ranging from 93\% to 96\%. In comparison, the EC-SOTA significantly underperforms, with an accuracy drop of approximately 10-12\% relative to the full SANEmerg model. When only the importance-filter is utilized, i.e., SANEmerg-IF, we observe a slight improvement over the EC-SOTA when the bandwidth constraint is not very tight. This confirms that, even when the communication bandwidth is not a bottleneck, filtering out non-essential state noise is also helpful for the aAgent to focus on transmitting the meaningful semantic intent to the pAgent. %This performance gap highlights the limitations of standard emergent communication methods that lack explicit semantic grounding, as they tend to overfit to specific training noise rather than to capture invariant, task-relevant features. 
Furthermore, using only the complexity-regularizer, i.e., SANEmerg-CR, yields better results than both the EC-SOTA baseline and SANEmerg-IF. %, though it is generally slightly lower than SANEmerg-IF. 
This is because the complexity-regularizer acts as a structural constraint that prevents the emergent messages from becoming overly complex or high-dimensional. This ``compression" effect forces the aAgent to develop more efficient and generalized signaling schemes.

\begin{figure}[t]
    \centering
    \begin{minipage}[t]{0.48\linewidth}
    \includegraphics[width=1\textwidth]{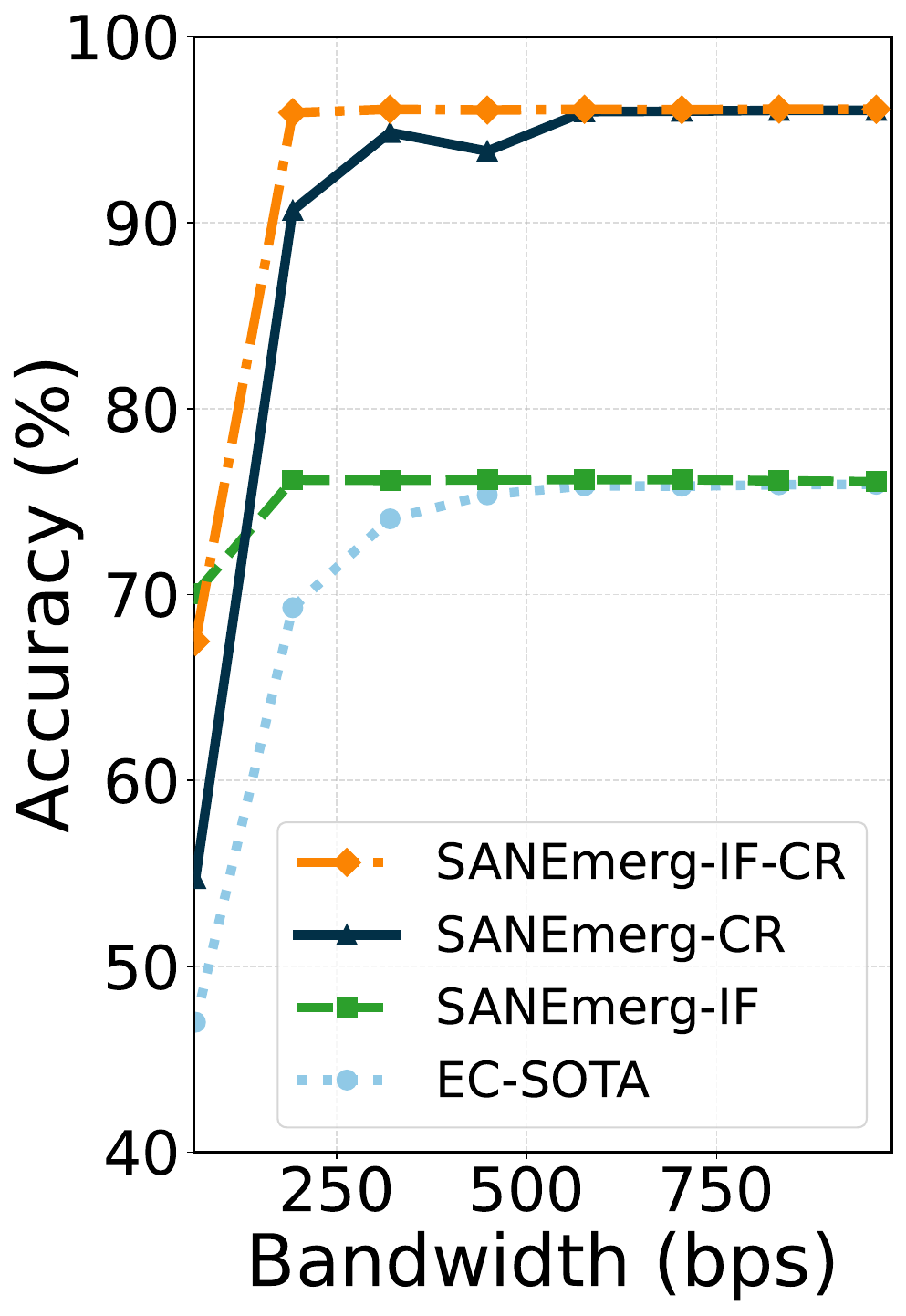}
    \captionsetup{labelformat=empty}
    \vspace{-0.1in}
    \caption*{(a)}
    \label{fig:sub_a}
    \end{minipage}
    \hfill
    \begin{minipage}[t]{0.48\linewidth}
    \includegraphics[width=1\textwidth]{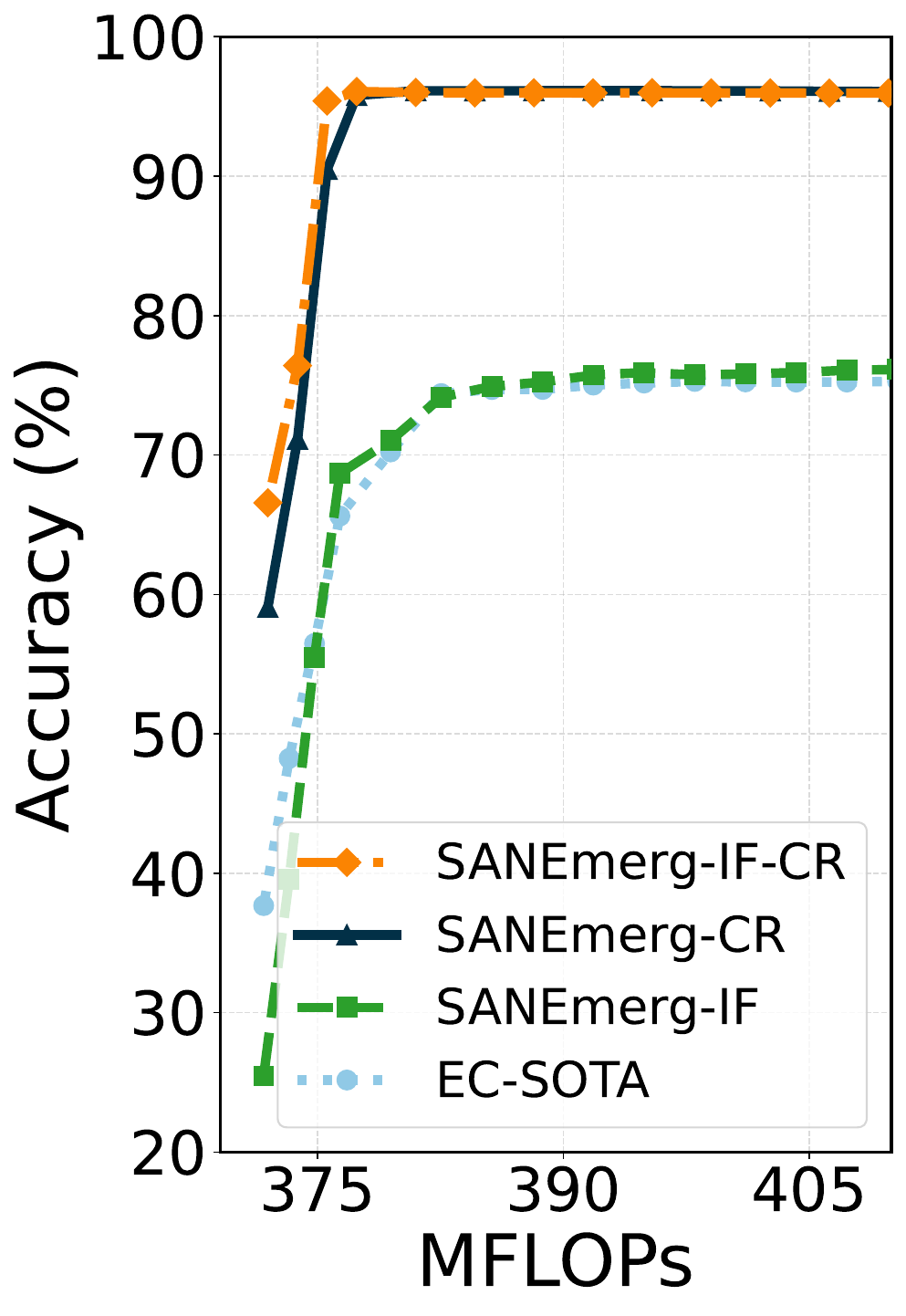}
    \captionsetup{labelformat=empty}
    \vspace{-0.1in}
    \caption*{(b)}
    \label{fig:sub_b}
    \end{minipage}
    \vspace{-0.1in}
    \caption{\small \blu{Inference accuracy under (a) bandwidth constraints, and (b) different computational complexity constraints.}}
    \label{Figure_BandwidthandComplexity}
    \vspace{-0.1in}
\end{figure}

We then evaluate the accuracy of the SANEmerg framework under various physical-layer constraints. In particular, in Fig. \ref{Figure_BandwidthandComplexity}(a), we evaluate the impact of communication bandwidth limitations on model accuracy. We can observe that the EC-SOTA baseline requires nearly 750 bps of bandwidth to reach its maximum of approximately 76\%. In contrast, the SANEmerg-IF-CR scheme achieves much higher efficiency, reaching a peak accuracy of approximately 96\% even with a bandwidth of only 250 bps, resulting in an improvement ratio of approximately 39.1\%. We also investigate the relationship between computational complexity (measured in MFLOPs) and inference accuracy of the aAgent in Fig. \ref{Figure_BandwidthandComplexity}(b). We observe that in the computationally limited scenario, e.g., with only 375 MFLOPs, the EC-SOTA baseline achieves approximately 57\% inference accuracy. The SANEmerg-IF-CR scheme, however, achieves ~95\% accuracy, resulting in a 66.6\% improvement ratio. %We can also observe that  t
While the Importance-Filter alone, i.e., SANEmerg-IF, achieves maximum accuracy faster than the EC-SOTA baseline, both schemes are capped at the same accuracy level. The addition of the complexity-regularizer, i.e., SANEmerg-CR, is the critical factor that unlocks the high-accuracy regime. This confirms that while the Importance-Filter identifies ``what" is important, the complexity-regularizer optimizes ``how" that information is compressed. %, allowing the aAgent to communicate complex intents to the pAgent with extremely tight computational complexity constraints. 

\section{Conclusion}
\label{Section_Conclusion}
%This paper proposes SANet, a novel semantic-aware AgentNet architecture that autonomously identifies the user's semantic goal and autonomously divides the identified goal into different subtasks for different agents. We introduce a novel functional entity, the agent controller, employed with a novel dynamic weighting-based conflict-resolving mechanism. We develop a hardware prototype, and our experimental result suggests that SANet significantly improves the performance of multi-agent networking systems, even with objective-conflicting agents. % under various dynamic environments. 

This paper has proposed SANEmerg, a novel multi-agent emergent communication framework tailored for semantic-aware AgentNet systems. By integrating a bandwidth-adaptable importance-filter and an MDL-based complexity-regularizer, SANEmerg enables autonomous agents to develop efficient, semantic-aware signaling protocols that prioritize transmitting the high-contribution message dimensions while adhering to strict computational complexity and communication bandwidth limitations. Experimental validation based on an AgentNet hardware prototype suggests that SANEmerg achieves significant performance improvements over state-of-the-art solutions, while reducing required communication bandwidth and computational complexity.  %We investigate how communication and signaling protocols can emerge among collaborative agents with computationally bounded intelligence under stringent bandwidth constraints between agents. We propose SANEmerg, a novel framework designed to facilitate the emergence of communication for collaborative task fulfillment while adhering to the physical limits of AgentNet. SANEmerg incorporates a bandwidth-adaptable importance filter that dynamically prioritizes the transmission of higher-contribution message dimensions, ensuring robust performance in bandwidth-limited environments. Furthermore, SANEmerg integrates a complexity-regularizer grounded in the Minimum Description Length (MDL) principle to facilitate computationally-bounded signaling emergence. 

\section*{Acknowledgment}
This work was supported by the National Natural Science Foundation of China under grant 62525109 and the Mobile Information Network National Science and Technology Key Project under grant 2024ZD1300700.

\bibliographystyle{IEEEtran}
\bibliography{DeepLearningRef}

\end{document}